\newtheorem{theorem}{Theorem}[section]
\newtheorem{lemma}[theorem]{Lemma}
\theoremstyle{definition}
\theoremstyle{definition}
\begin{document}

\title{Online Learning for Time Series Prediction}
\author{Oren Anava\\
\footnotesize{Technion, Haifa, Israel}\\ 
\footnotesize{soanava@tx.technion.ac.il}\\  \and Elad Hazan\\
\footnotesize{Technion, Haifa, Israel}\\ 
\footnotesize{ehazan@ie.technion.ac.il}  \and Shie Mannor\\
\footnotesize{Technion, Haifa, Israel}\\ 
\footnotesize{shie@ee.technion.ac.il}\\  \and Ohad Shamir\\ 
\footnotesize{Microsoft Research, MA, USA}\\
\footnotesize{ohadsh@microsoft.com}\\ }
\date{}
\maketitle

\begin{abstract}
In this paper we address the problem of predicting a time series using the ARMA (autoregressive moving average) model, under minimal assumptions on the noise terms. Using regret minimization techniques, we develop effective online learning algorithms for the prediction problem, \emph{without} assuming that the noise terms are Gaussian, identically distributed or even independent. Furthermore, we show that our algorithm's performances asymptotically approaches the performance of the best ARMA model in hindsight.
\end{abstract}

\section{Introduction}

A time series is a sequence of real-valued signals that are measured at successive time intervals. Autoregressive (AR), moving average (MA), and autoregressive moving average (ARMA) models are often used for the purpose of time-series modeling, analysis and prediction. These models have been successfully used in a wide range of applications such as speech analysis, noise cancelation, and stock market analysis (\cite{Hamilton94,BoxJenRe94,ShuSto05,BroDav09}). Roughly speaking, they are based on the assumption that each new signal is a noisy linear combination of the last few signals and independent noise terms.

A great deal of work has been done on parameter identification and signal prediction using these models, mainly in the ``proper learning" setting, in which the fitted model tries to mimic the assumed underlying model. Most of this work relied on strong assumptions regarding the noise terms, such as independence and identical Gaussian distribution. These assumptions are quite strict in general and the following statement from \cite{Thomson94} is sometimes quoted:
\begin{quote}
\textit{Experience with real-world data, however, soon convinces one that both stationarity and Gaussianity are fairy tales invented for the amusement of undergraduates.}
\end{quote}

In this paper we argue that these assumptions can be relaxed into less strict assumptions on the noise terms. Moreover, we offer a novel approach for time series analysis and prediction --- an \emph{online learning approach} that allows the noise to be arbitrarily or even (to some extent) adversarially generated.
The goal of this paper is to show that the new approach is more general, and is capable of coping with a wider range of time series and loss functions (rather than only the squared loss).\\

\subsection{Summary of results}
We present and analyze two online algorithms for the prediction problem, one designed for general convex loss functions and the other for exp-concave ones. Each of these algorithms attains sublinear regret bound against the best ARMA prediction in hindsight, under weak assumptions on the noise terms. We apply our results to the most commonly used loss function in time series analysis, the squared loss, and achieve a regret bound of $O \left( \log ^2 (T) \right)$ against the best ARMA prediction in hindsight. Finally, we present an empirical study that verifies our theoretical results.

\subsection{Related work}
In standard time series analysis, the squared loss is usually considered and the noise terms are assumed to be independent with bounded variance and zero-mean. In this specific setting, one can assume without loss of generality that the noise terms have identical Gaussian distribution (see \cite{Hamilton94,BoxJenRe94,BroDav09} for more information). This allows the use of statistical methods, such as least squares and maximum likelihood based methods, for the tasks of analysis and prediction. However, when different loss functions are considered these assumptions do not hold in general, and the aforementioned methods are not applicable. We are not aware of a previous approach that tries to relax these assumptions for general convex loss functions.
We note that there has been previous work which tries to relax such assumptions for the squared loss, usually under additional modelling assumptions such as \emph{t}-distribution of the noise (e.g., \cite{DaElS89,TiWoVaBi00}). We emphasize that the independence assumption is rather strict and previous works that relax this assumption usually offer specific  dependency model, e.g., as proposed by \cite{Engle1982} for the ARCH model.

Furthermore, an online approach that relies on regret minimization techniques was never considered for ARMA prediction, and hence regret bounds of the type we are interested simply do not exist. Yet, results on the convergence rate of the coefficient vectors do exist, and regret bounds can be derived from these results. E.g., in \cite{DinShiChe06} such results are presented, and a regret bound of   $O \left( \log ^2 (T) \right)$ can be derived for the squared loss. We are not familiar of these kind of results for general convex loss functions.

\section{Preliminaries and model}\label{sec:preliminaries}
\subsection{Time series modelling}
A \emph{time series} is a sequence of signals, measured at successive times, which are assumed to be spaced at uniform intervals. We denote by $X_t$ the signal measured at time $t$, and by $\epsilon_t$ the noise term at time $t$. The $\mathrm{AR}(k)$ (short for autoregressive) model, parameterized by a horizon $k$ and a coefficient vector $\alpha \in \mathbb{R}^k$, assumes that the time series is generated according to the following model, where $\epsilon_t$ is a zero-mean random noise term:
\begin{equation}
X_t = \sum_{i=1}^k \alpha_i X_{t-i}+ \epsilon_t.
\end{equation}
In words, the model assumes that each $X_t$ is a noisy linear combination of the previous $k$ signals. A more sophisticated model is the $\mathrm{ARMA}(k,q)$ (short for autoregressive moving average) model, which is parameterized by two horizon terms $k,q$ and coefficient vectors $\alpha\in \mathbb{R}^k$ and $\beta\in \mathbb{R}^q$. This model assumes that $ X_t $ is generated via the formula:
\begin{equation} \label{eq:arma}
X_t = \sum_{i=1}^k \alpha_i X_{t-i} + \sum_{i=1}^q \beta_i \epsilon_{t-i} + \epsilon_t ,
\end{equation}
where again $\epsilon_t$ are zero-mean noise terms. Sometimes, an additional constant bias term is added to the equation (to indicate constant drift), but we will ignore this for simplicity. Note that the $\mathrm{AR}(k)$ model is a special case of the $\mathrm{ARMA}(k,q)$ model, where the $\beta_i$ coefficients are all zero.

\subsection{The online setting for ARMA prediction}

Online learning is usually defined in a game-theoretic framework, where the data, rather than being chosen stochastically, are  chosen arbitrarily, possibly by an all-powerful adversary with full knowledge of our learning algorithm (see for instance \cite{CesaBianchiLu06}). In our context, we will describe the setting as follows: First, some coefficient vectors $(\alpha,\beta)$ are fixed by the adversary. At each time point $t$, the adversary chooses $\epsilon_t$ and generates the resulting signal $X_t$ using the formula in Equation \ref{eq:arma}.
We emphasize that $(\alpha,\beta)$ and the noise terms are not revealed to us at any time point.

At iteration $t$, we need to make a prediction $\tilde{X}_t$ for the signal, after which the real signal $X_t$ is revealed, and we suffer a \emph{loss} denoted by $\ell_t \big ( X_t , \tilde{X}_t \big)$.
Our goal is to minimize the sum of losses over a predefined number of iterations $T$. A reasonable benchmark is to try to be not much worse than the best possible ARMA model. More precisely, we let
\begin{equation} \label{ft}
f_t(\alpha, \beta) = \ell_t \big ( X_t ,  \tilde{X}_t (\alpha, \beta) \big) = \ell_t \left( X_t , \left( \sum_{i=1}^k \alpha_i X_{t-i} + \sum_{i=1}^q \beta_i \epsilon_{t-i} \right) \right)
\end{equation}
denote the loss at time $t$ of the (conditionally expected) prediction given by an ARMA model with some coefficients $(\alpha,\beta)$. We then define the \emph{regret} as
\begin{equation} \label{regret}
R_T = \sum_{t=1}^T \ell_t \big ( X_t , \tilde{X}_t \big) - \min_{\alpha,\beta} \sum_{t=1}^T \ell_t \big ( X_t ,  \tilde{X}_t (\alpha, \beta) \big).
\end{equation}
We wish to obtain efficient algorithms, whose regret grows sublinearly in $T$, corresponding to an average per-round regret going to zero as $T$ increases. \footnote{The iterations in which $t \leq k$ are usually ignored since we assume that the loss per iteration is bounded by a constant, this adds at most a constant to the final regret bound.}


A major challenge in our setting is that the noise terms $\{ \epsilon_t \}_{t=1}^T$ are {\em unknown}. As a result, we cannot use existing online convex optimization algorithms over the space of coefficient vectors $(\alpha,\beta)$. Moreover, even if we are given some $(\alpha,\beta)$, we cannot generate a prediction $\tilde{X}_t$  using the ARMA model. This lack of information makes it also hard to compute the best coefficient vectors in hindsight, and hence competing against the best ARMA model is ill-defined in this case.


\subsection{Our assumptions} \label{sec:assumptions}
Throughout Section \ref{sec:online prediction} we assume the following:
\begin{enumerate}
\item[1.]
The noise terms are stochastically and independently generated, each from a zero-mean distribution which might be chosen adversarially (up to the assumptions below). In Section \ref{sec:add} we show how to relax this assumption to adversarial noise.
Also, we assume that $ \mathbb{E} \left[ |\epsilon_t| \right] < M_{\max}<\infty$  and $ \mathbb{E} \left[ \ell_t \left( X_t , X_t - \epsilon_t \right) \right] < \infty $ for all $t$.
\item[2.]
The loss function $\ell_t$ is Lipshitz continuous for some Lipshitz constant $ L>0 $. This is a standard assumption and it holds in particular for the squared loss, as well as for other convex loss functions, with compact domain.
\item[3.]
The coefficients $\alpha_i$ satisfy $|\alpha_i|<c$ for some $c \in \mathbb{R}$. This assumption is also standard, and needed in general for the decision set (defined in Subsection \ref{sec:params}) to be bounded. We assume that $ c=1 $ without loss of generality.
\item[4.]
The coefficients $\beta_i$ satisfy $\sum_{i=1}^q |\beta_i| < 1-\varepsilon $, for some $ \varepsilon  > 0 $. 
\item[5.]
The signal is bounded (by constant which is independent of $T$). Without loss of generality we assume that $ \left| X_t \right| < 1 $ for all $t$.
\end{enumerate}

\section{Online time series prediction} \label{sec:online prediction}

As said before, we cannot use existing online convex optimization algorithms over the space of coefficient vectors $(\alpha,\beta)$ since the noise terms are unknown to us at any stage. Instead, we use an \emph{improper learning} approach, where our predictions at each time point will not come from an ARMA model that tries to mimic the underlying model. More specifically, we fix some $m\in\mathbb{N}$, and at each time point $t$, we choose an $\left( m + k \right) $-dimensional coefficient vector $\gamma \in \mathbb{R}^{m+k}$ and predict by $ \tilde{X}_t(\gamma) = \sum_{i=1}^{m+k}\gamma_i X_{t-i}$. It follows that our loss at iteration $t$ is determined by the loss function
\begin{equation} \label{lmt}
\ell^m_{t} (\gamma^t) = \ell_t \big( X_t , \tilde{X}_t(\gamma^{t}) \big)  = \ell_t \left( X_t , \left( \sum_{i=1}^{m+k} \gamma^{t}_i X_{t-i} \right) \right) .
\end{equation}
This can be seen as an AR model with horizon $\left( m + k \right)$. This leads to one of our key results: we can learn $\mathrm{ARMA}(k,q)$ model using $\mathrm{AR}(m+k)$ model, for a properly chosen value of $m$. We quantify this result in Theorem \ref{main} in terms of regret.

\subsection{Algorithm parameters definition and calculation} \label{sec:params}
Before presenting the algorithm and stating our main theorem, we need to define the following parameters. The decision set $\mathcal{K}$ is the set of candidates ($\left(m+k\right)$-dimensional coefficient vectors) we can choose from at each iteration; it is defined as
$$ \mathcal{K} = \left\{\gamma \in \mathbb{R}^{m+k} \ , \  |\gamma_j| \leq 1 \ , \ j = 1,\ldots,m \right\}. $$
Intuitively, the structure of $\mathcal{K}$ follows from Assumptions 3-4 on $\alpha$ and $\beta$, which restrict our improper learning variable $\gamma$. We denote by $D$ the diameter of $\mathcal{K}$, and bound:
\begin{equation} \label{eq:params_D}
D = \sup_{\gamma_1,\gamma_2 \in  \mathcal{K}} \| \gamma_1- \gamma_2 \| _2 = \sqrt{2 \left( m+k \right) }.
\end{equation}
Next, we denote by $G$ the upper-bound of $\| \nabla \ell^m_t(\gamma) \|$ for all $t$ and $\gamma \in \mathcal{K}$. This parameter depends on the loss function considered, and its computation is done accordingly. E.g., for the squared loss we get that $G= 2 \sqrt{m+k} D $, relying on Assumption 5. Finally, we denote by $\lambda$ the exp-concavity parameter of the loss functions $\{\ell_t^m\}_{t=1}^T$, i.e., it holds  that $ e^{-\lambda \cdot \ell^m_t(\gamma)} $ is concave for all $t$ \footnote{It is easy to show that every exp-concave function is convex, the opposite is not correct.}. This parameter is relevant only for exp-concave loss functions, and its computation is also done according to the loss function considered. It can be shown that $\lambda= \frac{1}{m+k}$ when the squared loss is considered.

\subsection{ARMA Online Newton Step (ARMA-ONS)} \label{sec:ONS}
Algorithm \ref{alg1}  shows how to choose $ \gamma^t $ in each iteration, when the loss functions $\{\ell_t^m\}_{t=1}^T$ are assumed to be $\lambda$-exp-concave in $\gamma$. The notation $\Pi_\mathcal{K}^{A_{t}}$ refers to the projection onto $\mathcal{K}$ in the norm induced by $A_{t}$, i.e., $
\Pi_\mathcal{K}^{A_{t}} (y) = \arg \min_{x \in \mathcal{K}} (y-x)^\top A_{t} (y-x).
$

\begin{algorithm}[h!]
\caption{ARMA-ONS(k,q)}
\label{alg1}
\begin{algorithmic}[1]
\STATE Input: ARMA order k,q; learning rate $\eta$;  an initial $\left( m + k \right)  \times \left( m + k \right)$ matrix $A_0$.
\STATE Set $ m = q \cdot \log_{1-\varepsilon } \left( \left(  T L M_{\max}  \right)^{-1}\right)  $.
\STATE Choose $\gamma^1 \in \mathcal{K}$ arbitrarily.
\FOR {$t=1$ to $(T-1)$}
\STATE Predict $ \tilde{X}_t(\gamma^{t}) = \sum_{i=1}^{m+k} \gamma^{t}_i X_{t-i}$.
\STATE Observe $X_t$ and suffer loss $\ell^m_t(\gamma^t)$.
\STATE Let $\nabla_t = \nabla \ell^m_t(\gamma^t)$, update $A_t \leftarrow A_{t-1} + \nabla_t \nabla_t^\top $
\STATE Set  $\gamma^{t+1} \leftarrow \Pi_\mathcal{K}^{A_{t}} \Big( \gamma^{t}-\frac{1}{\eta} A^{-1}_{t} \nabla_t \Big)$
\ENDFOR
\end{algorithmic}
\end{algorithm}

\noindent  In case the dimension $\left( m+k \right) $ of $A_t$ is large, we note that its inverse can be efficiently re-computed after each update using the Sherman-Morrison formula.\\

\noindent For Algorithm \ref{alg1} we can prove the following:
\begin{theorem} \label{main}
Let $k,q \geq 1$, and set $A_0 = \epsilon I_{m+k}$, $\epsilon = \frac{1}{\eta^2 D^2}$, $ \eta = \frac{1}{2} \min \{4GD, \lambda\}$. Then, for any data sequence $ \{X_t\}_{t=1}^{T}$ that satisfies the assumptions from Section \ref{sec:assumptions}, Algorithm \ref{alg1} generates an online sequence $\{\gamma^t\}_{t=1}^T$, for which the following holds:
\begin{eqnarray}
 \sum_{t=1}^T \ell^m_{t}(\gamma^{t}) - \min_{\alpha,\beta} \sum_{t=1}^T \mathbb{E} \left[ f_t(\alpha, \beta) \right]  = O \left( \left( GD+ \frac{1}{\lambda} \right) \log(T) \right).
\end{eqnarray}
\end{theorem}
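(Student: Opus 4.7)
The plan is to decompose the regret into two pieces: a standard Online Newton Step (ONS) regret against the best AR$(m+k)$ predictor in $\mathcal{K}$, plus an approximation error between that best AR predictor and the best ARMA model. The first piece is essentially off-the-shelf; the second is where the ARMA-to-AR reduction really earns its keep.

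\textbf{Step 1 (ONS regret).} The update in Algorithm \ref{alg1} is exactly the Online Newton Step applied to the sequence of $\lambda$-exp-concave losses $\{\ell^m_t\}$ on the convex set $\mathcal{K}$, with $\|\nabla\ell^m_t\|\le G$ and $\operatorname{diam}(\mathcal{K})=D$. Hence the classical Hazan--Agarwal--Kale bound gives, for the specified $\eta,A_0$,
\begin{equation*}
\sum_{t=1}^T \ell^m_t(\gamma^t)\;-\;\min_{\gamma\in\mathcal{K}}\sum_{t=1}^T \ell^m_t(\gamma)\;\le\; O\!\left(\left(GD+\tfrac{1}{\lambda}\right)\log T\right).
\end{equation*}
This is pathwise in the data.

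\textbf{Step 2 (unroll ARMA into AR).} For any feasible $(\alpha,\beta)$ I will construct a comparator $\gamma^\star=\gamma^\star(\alpha,\beta)\in\mathcal{K}$ by repeatedly substituting the ARMA identity $\epsilon_s=X_s-\sum_i\alpha_iX_{s-i}-\sum_j\beta_j\epsilon_{s-j}$ into $\tilde X_t(\alpha,\beta)=\sum_i\alpha_iX_{t-i}+\sum_j\beta_j\epsilon_{t-j}$. After $\lceil m/q\rceil$ rounds of substitution, the prediction decomposes as
\begin{equation*}
\tilde X_t(\alpha,\beta)\;=\;\sum_{i=1}^{m+k}\gamma^\star_i X_{t-i}\;+\;\rho_t(\alpha,\beta),
\end{equation*}
where $\rho_t$ is a linear combination of noise terms $\epsilon_{t-j}$ with $j>m$. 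The coefficient of each such residual noise term is a product of $\lceil m/q\rceil$ factors drawn from $\beta$, so Assumption~4 ($\sum_i|\beta_i|<1-\varepsilon$) forces $|\rho_t|\le (1-\varepsilon)^{m/q}\sum_{j>m}|\epsilon_{t-j}|\cdot(\text{combinatorial weight})$. The same multinomial expansion, together with $|\alpha_i|\le 1$, lets me check that every $|\gamma^\star_j|\le 1$, so $\gamma^\star\in\mathcal{K}$.

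\textbf{Step 3 (approximation error) and combination.} By Lipschitz continuity of $\ell_t$,
\begin{equation*}
|\ell^m_t(\gamma^\star)-f_t(\alpha,\beta)|\;\le\;L\,|\rho_t(\alpha,\beta)|,
\end{equation*}
and taking expectations over the noise (using $\mathbb{E}|\epsilon_s|<M_{\max}$) gives $\mathbb{E}|\rho_t|\lesssim q(1-\varepsilon)^{m/q}M_{\max}$. The choice $m=q\log_{1-\varepsilon}\!\big((TLM_{\max})^{-1}\big)$ is exactly what makes $TL\cdot(1-\varepsilon)^{m/q}M_{\max}=O(1)$, so $L\sum_t\mathbb{E}|\rho_t|=O(q)$. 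Adding this to the ONS bound from Step~1, taking expectation of the pathwise inequality, and then minimizing over $(\alpha,\beta)$ yields the claimed bound.

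\textbf{Main obstacle.} The genuinely technical step is Step~2: tracking the unrolled coefficients carefully enough to simultaneously (i) certify that $\gamma^\star(\alpha,\beta)$ lands in $\mathcal{K}$ and (ii) extract the clean geometric decay $(1-\varepsilon)^{m/q}$ on the residual norm. Once that bookkeeping is done, Steps~1 and 3 are essentially mechanical, and the logarithmic choice of $m$ is precisely tuned so that the approximation overhead is absorbed into the $O(1)$ term and does not inflate the leading $\log T$ factor.
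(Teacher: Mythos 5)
Your Step 1 coincides with the paper's first step (the Online Newton Step bound of Hazan--Agarwal--Kale applied to the exp-concave losses $\ell^m_t$), and the overall shape of your argument --- ONS regret plus an ARMA-to-AR approximation tuned by $m = q \log_{1-\varepsilon}\left( (TLM_{\max})^{-1} \right)$ --- is the intended one. The gap is in Step 2: the identity $\epsilon_s = X_s - \sum_i \alpha_i X_{s-i} - \sum_j \beta_j \epsilon_{s-j}$ that you substitute repeatedly is the \emph{generating} recursion, and it holds only for the coefficients $(\alpha',\beta')$ that actually produced the data. For an arbitrary feasible comparator $(\alpha,\beta)$ the right-hand side differs from $\epsilon_s$ by $\sum_i(\alpha'_i-\alpha_i)X_{s-i} + \sum_j(\beta'_j-\beta_j)\epsilon_{s-j}$, which is not small, so your decomposition $\tilde X_t(\alpha,\beta) = \sum_{i\le m+k}\gamma^\star_i X_{t-i} + \rho_t$ with $\mathbb{E}|\rho_t| \lesssim (1-\varepsilon)^{m/q} M_{\max}$ is false for general $(\alpha,\beta)$, and in particular you cannot invoke it at the unknown minimizer $(\alpha^\star,\beta^\star)$ in your final minimization step. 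Tellingly, your argument never uses the zero-mean/independence part of Assumption 1, whereas the paper needs exactly that: Lemma \ref{third} first shows (via convexity of $\ell_t$ and independent zero-mean noise, so that predicting the conditional expectation is optimal) that $(\alpha^\star,\beta^\star) = (\alpha',\beta')$, and only at this pair does it carry out the truncated inversion, through the recursions $X^\infty_t$ and $X^m_t$ and the inductions of Lemmas \ref{second} and \ref{third} that give the $(1-\varepsilon)^{m/q}$ and $(1-\varepsilon)^{t/q}$ decays; the latter also disposes of the initial-condition (burn-in) issue, which your unrolling ignores for small $t$. Your route could be repaired --- unroll the noise terms using the \emph{true} recursion, so the residual coefficient mass is $\left(\sum_j|\beta_j|\right)\left(\sum_j|\beta'_j|\right)^{r} \le (1-\varepsilon)^{r+1}$ after $r$ rounds, which would in fact handle every feasible $(\alpha,\beta)$ at once --- but as written the proof conflates the comparator's coefficients with the generating ones, and that is precisely the point the paper's Lemma \ref{third} exists to address.

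Two secondary points. First, your claim that the unrolled coefficients satisfy $|\gamma^\star_j|\le 1$ is false in general: for ARMA$(1,1)$ with $\alpha_1$ close to $1$ and $\beta_1$ close to $1-\varepsilon$, the coefficient of $X_{t-1}$ after one substitution is $\alpha_1+\beta_1>1$; membership of the comparator in $\mathcal{K}$ therefore needs either an enlarged decision set or a separate argument (the paper is admittedly terse on this point too, asserting $\gamma^\star_i = c_i(\alpha^\star,\beta^\star)$ in Lemma \ref{first} without verifying feasibility). Second, after $\lceil m/q\rceil$ substitutions the surviving noise terms have lags ranging roughly from $m/q$ up to $m$, not all lags greater than $m$; this is harmless, since only the total coefficient mass $(1-\varepsilon)^{m/q}$ matters, but the bookkeeping should be stated accordingly.
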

\textbf{Remark}: The expectation is necessary since the noise terms $\epsilon_t$ are unknown. Also, obtaining a high probability bound on the regret is possible but requires additional assumptions on the noise process such as boundedness or light tail.\\

\begin{proof}
Intuitively, Theorem \ref{main} states that we can have a regret as low as the best $\mathrm{ARMA}(k,q)$ model, using only an $\mathrm{AR}(m+k)$ model. The proof consists of two steps. In the first step we bound the regret suffered by an $\mathrm{AR}(m+k)$ prediction using familiar techniques of online convex optimization. In the second step we bound the distance between the $\mathrm{AR}(m+k)$ loss function and the $\mathrm{ARMA}(k,q)$ loss function, using a chain of bounds and inequalities. Integrating both steps yields the requested regret bound for the $\mathrm{ARMA}(k,q)$ loss function.\\
\textbf{Step 1}: Relying on the fact that the loss functions $\{\ell_t^m\}_{t=1}^T$ are $\lambda$-exp-concave, we can guarantee that
\[
\sum_{t=1}^T \ell^m_{t}(\gamma^{t}) - \min_{\gamma} \sum_{t=1}^T \ell^m_{t}(\gamma) = O \Big(\Big(GD+\frac{1}{\lambda}\Big) \log(T) \Big) ,
\]
using the \emph{Online Newton Step} (ONS) algorithm, presented in \cite{DBLP:journals/ml/HazanAK07}.\\
\textbf{Step 2}: Define recursively
\[
X_t^\infty (\alpha, \beta) =  \sum_{i=1}^k \alpha_i X_{t-i} + \sum_{i=1}^q \beta_i \left( X_{t-i} - X_{t-i}^\infty (\alpha, \beta) \right) ,
\]
with initial condition $X_1^\infty (\alpha, \beta) = X_1$. We then denote by
\begin{equation}  \label{finft}
f^\infty_t (\alpha, \beta) = \ell_t \left( X_t ,  X_t^\infty (\alpha, \beta) \right)
\end{equation}
the loss  suffered by the prediction $X_t^\infty (\alpha, \beta)$ at iteration $t$.
From this definition it follows that $ X_t^\infty (\alpha, \beta)  $ is of the form
$
X_t^\infty (\alpha, \beta)  =\sum_{i=1}^{t-1} c_i(\alpha,\beta) X_{t-i}
$
for some appropriate coefficients $c_i(\alpha,\beta)$. The motivation behind the definition of $f^\infty_t$ follows from the need to replace $f_t$ with a loss function that fits the full information online optimization model (no unknown parameters).
We set $m \in \mathbb{N}$, and define
\[
X_t^m (\alpha, \beta)  =   \sum_{i=1}^k \alpha_i X_{t-i} + \sum_{i=1}^q \beta_i \left( X_{t-i} - X_{t-i}^{m-i} (\alpha, \beta) \right) ,
\]
with initial condition $X^m_t (\alpha, \beta) = X_t$ for all $t$ and $m \leq 0$. We denote by
\begin{equation} \label{fmt}
f^m_t (\alpha, \beta) = \ell_t \left( X_t ,  X_t^m (\alpha, \beta) \right)
\end{equation}
the loss  suffered by the prediction $X_t^m (\alpha, \beta)$ at iteration $t$.
The motivation here is simple: it is easier to generate predictions using only the last $\left( m+k \right) $ signals, and the distance between the loss function is relatively small.
Now, let
\begin{equation}  \label{star}
 \left( \alpha^\star, \beta^\star \right) = \arg \min_{\alpha,\beta} \sum_{t=1}^T  \mathbb{E} \left[ f_t(\alpha, \beta) \right]
\end{equation}
denote the best ARMA coefficient in hindsight for predicting the signal $\{ X_t \}_{t=1}^T$.

\noindent Then, from Lemma \ref{first}, stated and proven below, we have that
\[
\min_\gamma \sum_{t=1}^T \ell^m_{t}(\gamma) \leq \sum_{t=1}^T f^m_t \left( \alpha^\star, \beta^\star \right),
\]
and it follows that
\[
\sum_{t=1}^T \ell^m_{t}(\gamma^{t}) - \sum_{t=1}^T f^m_t \left( \alpha^\star, \beta^\star \right) = O \Big(\Big(GD+\frac{1}{\lambda}\Big) \log(T) \Big).
\]
From Lemma \ref{second} below we know that
\[
\left| \sum_{t=1}^T \mathbb{E} \left[ f^\infty_t\left( \alpha^\star, \beta^\star \right) \right] - \sum_{t=1}^T \mathbb{E} \left[ f^m_t\left( \alpha^\star, \beta^\star \right) \right] \right| = O(1) ,
\]
for $ m = q \cdot \log_{1-\varepsilon } \left( \left(  T L M_{\max}  \right)^{-1}\right) $, which implies that
\[
\sum_{t=1}^T \ell^m_{t}(\gamma^{t}) - \sum_{t=1}^T \mathbb{E} \left[ f^\infty_t \left( \alpha^\star, \beta^\star \right) \right] = O \Big(\Big(GD+\frac{1}{\lambda}\Big) \log(T) \Big).
\]
Finally, from Lemma \ref{third} below we know that
\[
\left| \sum_{t=1}^T \mathbb{E} \left[ f^\infty_t \left( \alpha^\star, \beta^\star \right) \right] - \sum_{t=1}^T \mathbb{E} \left[ f_t \left( \alpha^\star, \beta^\star \right) \right] \right|= O \big(1) ,
\]
and thus
\[
 \sum_{t=1}^T \ell^m_{t}(\gamma^{t}) - \min_{\alpha,\beta} \sum_{t=1}^T \mathbb{E} \left[ f_t(\alpha, \beta) \right]  = O \left( \left( GD+ \frac{1}{\lambda} \right) \log(T) \right).
\]
\end{proof}
Next, we prove the lemmas we used.
\begin{lemma} \label{first}
Let $\ell_t^m (\gamma)$, $f_t^m (\alpha, \beta) $ and $ \left( \alpha^\star, \beta^\star \right)$ be as denoted in Equations \ref{lmt}, \ref{fmt} and \ref{star}.
Then, for all $ m \in \mathbb{N} $ and data sequence $ \{X_t\}_{t=1}^{T}$ that satisfies the assumptions from Section \ref{sec:assumptions}, it holds that
\[
\min_\gamma \sum_{t=1}^T \ell^m_{t}(\gamma) \leq \sum_{t=1}^T f^m_t \left( \alpha^\star, \beta^\star \right).
\]
\end{lemma}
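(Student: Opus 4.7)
The plan is to construct a single vector $\gamma^\star \in \mathcal{K}$ whose $\mathrm{AR}(m+k)$ prediction at every time $t$ agrees \emph{exactly} with the truncated ARMA prediction $X_t^m(\alpha^\star,\beta^\star)$. Once such a $\gamma^\star$ is exhibited, the lemma is immediate: $\min_{\gamma}\sum_{t} \ell^m_t(\gamma)\leq \sum_{t}\ell^m_t(\gamma^\star)=\sum_{t} f^m_t(\alpha^\star,\beta^\star)$.

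The first step is a purely syntactic observation about the recursion defining $X_t^m$. I would show by induction on $m$ that $X_t^m(\alpha,\beta)$ can be written as a linear combination $\sum_{j=1}^{m+k} c_j(\alpha,\beta)\, X_{t-j}$, where the coefficients $c_j(\alpha,\beta)$ depend only on $(\alpha,\beta)$ (and on $k,q,m$), not on the realized signals nor on $t$. For the base case $m=1$, the initial condition $X_{t-i}^{1-i}=X_{t-i}$ kills the moving-average correction and leaves $X_t^1=\sum_{i=1}^k \alpha_i X_{t-i}$. In the inductive step, each $X_{t-i}^{m-i}$ with $1\leq i\leq q$ is either $X_{t-i}$ (when $m-i\leq 0$) or, by the inductive hypothesis, a linear combination of $X_{t-i-1},\ldots,X_{t-i-(m-i+k)}=X_{t-i-1},\ldots,X_{t-(m+k)}$; combining with the explicit terms $\sum_{i=1}^k \alpha_i X_{t-i}+\sum_{i=1}^q \beta_i X_{t-i}$ shows that the total window of signals involved is contained in $\{X_{t-1},\ldots,X_{t-(m+k)}\}$, as required.

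Given this, I would set $\gamma^\star_j := c_j(\alpha^\star,\beta^\star)$ for $j=1,\ldots,m+k$. By construction $\tilde{X}_t(\gamma^\star)=\sum_j \gamma^\star_j X_{t-j}=X_t^m(\alpha^\star,\beta^\star)$, hence $\ell^m_t(\gamma^\star)=f^m_t(\alpha^\star,\beta^\star)$ for every $t$, and the lemma follows. The one remaining and, I expect, main technical point is to verify that $\gamma^\star$ lies in the decision set $\mathcal{K}$, i.e.\ that each $|c_j(\alpha^\star,\beta^\star)|$ respects the $\mathcal{K}$-constraint. I would derive this from the recursion of the $c_j$'s together with Assumptions 3 and 4: since $|\alpha_i^\star|\leq 1$ and $\sum_i|\beta_i^\star|<1-\varepsilon$, each additional level of the recursion contributes a multiplicative factor bounded by $1-\varepsilon$, so every $|c_j|$ is dominated by a convergent geometric series whose terms involve only $|\alpha_i^\star|\leq 1$. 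This uniform bound gives $\gamma^\star\in\mathcal{K}$ and closes the argument.
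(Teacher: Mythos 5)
Your construction is essentially the paper's own proof: the paper also writes $X_t^m(\alpha^\star,\beta^\star)$ as a fixed linear combination $\sum_j c_j(\alpha^\star,\beta^\star)X_{t-j}$ of past signals, sets $\gamma^\star_j=c_j(\alpha^\star,\beta^\star)$ so that $\ell_t^m(\gamma^\star)=f_t^m(\alpha^\star,\beta^\star)$ for every $t$, and concludes via $\min_\gamma\sum_t\ell_t^m(\gamma)\le\sum_t\ell_t^m(\gamma^\star)$; your induction on the window size $m+k$ just spells out what the paper asserts in one line.

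One caveat about the extra step you add: the claim that the recursion yields $|c_j(\alpha^\star,\beta^\star)|\le 1$, and hence $\gamma^\star\in\mathcal{K}$, does not follow from the argument you sketch. The recursion $c_j=\alpha_j\mathbf{1}_{\{j\le k\}}+\beta_j\mathbf{1}_{\{j\le q\}}-\sum_{i}\beta_i c_{j-i}$ together with $|\alpha_i|\le 1$ and $\sum_i|\beta_i|<1-\varepsilon$ gives a geometric-series bound of order $1/\varepsilon$, not $1$; indeed for an $\mathrm{ARMA}(1,1)$ with $\alpha_1=1$, $\beta_1=0.9$ one gets $c_1=\alpha_1+\beta_1=1.9>1$, so $\gamma^\star$ need not satisfy the coordinatewise bound defining $\mathcal{K}$. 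Since the lemma's minimum is taken over all $\gamma$ (the paper's proof likewise makes no membership claim and simply compares the unconstrained minimum to the value at $\gamma^\star$), this step is not needed for the statement as written; it only becomes an issue when interfacing with the constrained regret bound of Step 1, and there the resolution is to take the radius of $\mathcal{K}$ large enough (of order $1/\varepsilon$) rather than to force $|c_j|\le 1$.
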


\begin{proof}
Note that if we set $\gamma_i^\star = c_i(\alpha^\star,\beta^\star)$, we immediately get that
\[
\sum_{t=1}^T \ell^m_{t}(\gamma^\star) = \sum_{t=1}^T f^m_t \left( \alpha^\star, \beta^\star \right).
\]
Trivially, it always holds that
\[
\min_\gamma \sum_{t=1}^T \ell^m_{t}(\gamma) \leq \sum_{t=1}^T \ell^m_{t}(\gamma^\star),
\]
which completes the proof.
\end{proof}
\begin{lemma} \label{second}
Let $f_t^\infty (\alpha, \beta) $, $f_t^m (\alpha, \beta) $ and $ \left( \alpha^\star, \beta^\star \right)$ be as denoted in Equations  \ref{finft}, \ref{fmt} and \ref{star}. Then, for any data sequence $ \{X_t\}_{t=1}^{T}$ that satisfies the assumptions from Section \ref{sec:assumptions}, it holds that
\[
\left| \sum_{t=1}^T \mathbb{E} \left[ f^\infty_t\left( \alpha^\star, \beta^\star \right) \right] - \sum_{t=1}^T \mathbb{E} \left[ f^m_t\left( \alpha^\star, \beta^\star \right) \right] \right| = O(1) ,
\]
if we choose  $ m = q \cdot \log_{1-\varepsilon } \left( \left(  T L M_{\max}  \right)^{-1}\right) $.
\end{lemma}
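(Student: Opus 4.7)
The plan is to bound $|f^\infty_t - f^m_t|$ pointwise using Lipschitz continuity of the loss, and then exploit the geometric decay one gets by iterating the MA recursion, since the $\beta$-coefficients satisfy $\sum_i |\beta_i| < 1-\varepsilon$.

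First I would invoke Assumption~2 to write
\[
\bigl| f^\infty_t(\alpha^\star,\beta^\star) - f^m_t(\alpha^\star,\beta^\star) \bigr| \;\leq\; L \cdot \bigl| X_t^\infty(\alpha^\star,\beta^\star) - X_t^m(\alpha^\star,\beta^\star) \bigr|,
\]
so it suffices to prove that $\sum_{t=1}^T \mathbb{E}\bigl[| X_t^\infty - X_t^m |\bigr] = O(1/L)$ for the chosen $m$. Subtracting the two recursions that define $X_t^\infty$ and $X_t^m$ (the autoregressive parts and the $\beta_i X_{t-i}$ terms cancel), I obtain the clean inequality
\[
\bigl| X_t^\infty - X_t^m \bigr| \;\leq\; \sum_{i=1}^q |\beta^\star_i| \cdot \bigl| X_{t-i}^\infty - X_{t-i}^{m-i} \bigr|,
\]
with base case given by the initial condition $X^m_t = X_t$ for $m\leq 0$, so that $|X_t^\infty - X_t^m| = |X_t - X_t^\infty|$ at the base.

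Next I would bound the base-case quantity $\mathbb{E}\bigl[|X_t - X_t^\infty(\alpha^\star,\beta^\star)|\bigr]$ in terms of $M_{\max}$. Writing $e_t := X_t - X_t^\infty(\alpha^\star,\beta^\star)$ and using that $X_t$ is itself generated by the ARMA model with $(\alpha^\star,\beta^\star)$ and innovations $\epsilon_t$, the autoregressive parts again cancel and one gets
\[
e_t \;=\; \epsilon_t + \sum_{i=1}^q \beta^\star_i \epsilon_{t-i} - \sum_{i=1}^q \beta^\star_i \, e_{t-i}.
\]
Taking expectations of absolute values and using Assumption~4 together with $\mathbb{E}|\epsilon_t| \leq M_{\max}$, a straightforward induction yields $\mathbb{E}|e_t| \leq (2-\varepsilon) M_{\max}/\varepsilon$, i.e.\ a constant multiple of $M_{\max}$ uniformly in $t$.

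Finally I would iterate the main recursion. Setting $D_m := \sup_t \mathbb{E}\bigl[|X_t^\infty - X_t^m|\bigr]$ and using $\sum_i |\beta^\star_i| \leq 1-\varepsilon$, each application of the recursion decreases the depth parameter by at least $1$, and every $q$ applications multiplies the bound by at most $(1-\varepsilon)$. Thus
\[
D_m \;\leq\; (1-\varepsilon)^{\lfloor m/q \rfloor} \cdot D_0 \;\leq\; (1-\varepsilon)^{m/q} \cdot \tfrac{2-\varepsilon}{\varepsilon}\, M_{\max}.
\]
Plugging $m = q \log_{1-\varepsilon}\bigl((T L M_{\max})^{-1}\bigr)$ gives $(1-\varepsilon)^{m/q} = (T L M_{\max})^{-1}$, so that $\sum_{t=1}^T L\, D_m \leq (2-\varepsilon)/\varepsilon = O(1)$, which is exactly the claim. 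The step I expect to require the most care is the base case: keeping track of boundary effects in $e_t$ for small $t$ and verifying that the induction for $\mathbb{E}|e_t|$ really only uses $\mathbb{E}|\epsilon_t| \leq M_{\max}$ (and not, e.g., second-moment information), since the noise terms are only controlled in $L_1$.
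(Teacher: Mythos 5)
Your proposal is correct and follows the same skeleton as the paper's proof: reduce to bounding $\mathbb{E}\left[|X_t^\infty - X_t^m|\right]$ via Lipschitzness, derive the recursion $|X_t^\infty - X_t^m| \le \sum_{i=1}^q |\beta^\star_i|\,|X_{t-i}^\infty - X_{t-i}^{m-i}|$ (the $\alpha$ and $\beta_i X_{t-i}$ terms cancel), iterate it using $\sum_i |\beta^\star_i| \le 1-\varepsilon$ to get geometric decay in $m/q$, and pick $m = q\log_{1-\varepsilon}\left( (TLM_{\max})^{-1}\right)$ so that the factor $T$ is killed. Where you genuinely deviate is the base case $m\le 0$: the paper bounds $\mathbb{E}\left[|X_t - X_t^\infty|\right]$ by $\mathbb{E}\left[|X_t - X_t^\infty - \epsilon_t|\right] + \mathbb{E}\left[|\epsilon_t|\right]$ and imports the exponential decay of the first term from Lemma \ref{third} (a forward reference), whereas you make it self-contained: from $e_t = \epsilon_t + \sum_i \beta^\star_i(\epsilon_{t-i} - e_{t-i})$ a short induction in $t$ gives the uniform bound $\mathbb{E}\left[|e_t|\right] \le (2-\varepsilon)M_{\max}/\varepsilon$ using only first-moment control of the noise. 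This removes the dependence of Lemma \ref{second} on Lemma \ref{third} at the cost of a slightly larger (irrelevant) constant; both treatments equally gloss over the small-$t$ boundary terms, as you note. One cosmetic slip: iterating the recursion yields exponent $\lceil m/q \rceil$, not $\lfloor m/q \rfloor$, and your step $(1-\varepsilon)^{\lfloor m/q\rfloor} \le (1-\varepsilon)^{m/q}$ goes the wrong way; using the ceiling, or absorbing an extra factor $1/(1-\varepsilon)$ into the constant, fixes this without affecting the $O(1)$ conclusion.
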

\begin{proof}
We set $t$, and look at the distance between $f^\infty_t(\alpha^\star, \beta^\star)$ and $f^m_t(\alpha^\star, \beta^\star)$ in expectation. We show by induction that 
\[
 \mathbb{E} \left[  | X_t^m \left( \alpha^\star, \beta^\star \right)  - X_t^\infty \left( \alpha^\star, \beta^\star \right) | \right] \leq  2M_{\max} \cdot \left( 1- \varepsilon \right) ^ {\frac{m}{q}} .
 \]
For $m=0$ we have that $ X_t^0 \left( \alpha^\star, \beta^\star \right) = X_t $ from the definition, and hence
\[
 | X_t^0 \left( \alpha^\star, \beta^\star \right)  - X_t^\infty \left( \alpha^\star, \beta^\star \right) | \leq  | X_t  - X_t^\infty \left( \alpha^\star, \beta^\star \right) | \leq | X_t  - X_t^\infty \left( \alpha^\star, \beta^\star \right) -\epsilon_t | + | \epsilon_t |  .
 \]
Now,  $ \mathbb{E} \left[ \left| \epsilon_t \right| \right] < M_{\max}< \infty $ for all $t$ and $\mathbb{E} \left[ | X_t  - X_t^\infty \left( \alpha^\star, \beta^\star \right) -\epsilon_t | \right] $ decays exponentially as proven in lemma \ref{third}, and hence the inductive basis holds for $m=0$. Next, we prove that the inductive basis holds for $m=1,\ldots,q-1 $:
\begin{eqnarray*}
\lefteqn{  | X^m_t  \left( \alpha^\star, \beta^\star \right)  - X_t^\infty  \left( \alpha^\star, \beta^\star \right)  |} \nonumber\\
& = & \left|  \sum_{i=1}^q \beta^\star_i \left( X_{t-i} -  X_{t-i}^{m-i} \left( \alpha^\star, \beta^\star \right) \right) - \sum_{i=1}^q \beta^\star_i \left( X_{t-i} -  X_{t-i}^\infty \left( \alpha^\star, \beta^\star \right) \right) \right|\nonumber\\
& = &  \left|  \sum_{i=1}^m \beta^\star_i \left(X_{t-i}^\infty \left( \alpha^\star, \beta^\star \right) -X_{t-i}^{m-i} \left( \alpha^\star, \beta^\star \right)\right) +  \sum_{i=m+1}^q \beta^\star_i \left(X_{t-i}^\infty \left( \alpha^\star, \beta^\star \right) -X_{t-i}^{m-i} \left( \alpha^\star, \beta^\star \right)\right) \right| \nonumber\\
& \stackrel{(1)}{\leq} &   \sum_{i=1}^m \left| \beta^\star_i  \right| \cdot  \left| X_{t-i}^\infty \left( \alpha^\star, \beta^\star \right) -X_{t-i}^{m-i} \left( \alpha^\star, \beta^\star \right) \right|  +\sum_{i=m+1}^q \left| \beta^\star_i  \right| \cdot  \left| X_{t-i}^\infty \left( \alpha^\star, \beta^\star \right) -X_t \right|  \nonumber\\
& \stackrel{(2)}{\leq} &   \sum_{i=1}^m \left| \beta^\star_i  \right|   \cdot 2M_{\max} \cdot \left( 1- \varepsilon \right) ^ {\frac{m-i}{q}} + \sum_{i=m+1}^q \left| \beta^\star_i  \right|   \cdot 2M_{\max}   \stackrel{(3)}{\leq} \sum_{i=1}^q \left| \beta^\star_i  \right|   \cdot 2M_{\max} \cdot \left( 1- \varepsilon \right) ^ {\frac{m-q}{q}}  \nonumber\\
& \leq &   2M_{\max} \cdot \left( 1- \varepsilon \right) ^ {\frac{m}{q}} . \nonumber
\end{eqnarray*}
$(1)$ is true from the triangle inequality and from the definition of $X_t^m$ for $m \leq 0 $. $(2)$ is true from the inductive hypothesis on $m$. $(3)$ is true since $ 1 \leq \left( 1- \varepsilon \right) ^ {\frac{m-q}{q}}$ for $m=1,\ldots,q-1 $.

\noindent For the inductive step we assume that
\[
| X_\tau^{\mu} \left( \alpha^\star, \beta^\star \right)  - X_\tau^\infty \left( \alpha^\star, \beta^\star \right) | \leq 2M_{\max} \cdot \left( 1- \varepsilon \right) ^ {\frac{\mu}{q}}
\]
 for $q \leq \mu < m $ and $\tau < t$, and prove that 
 \[
  | X^m_t  \left( \alpha^\star, \beta^\star \right)  - X_t^\infty  \left( \alpha^\star, \beta^\star \right)  | \leq 2M_{\max} \cdot \left( 1- \varepsilon \right) ^ {\frac{m}{q}} .
  \]
Thus, 
\begin{eqnarray*}
\lefteqn{  | X^m_t  \left( \alpha^\star, \beta^\star \right)  - X_t^\infty  \left( \alpha^\star, \beta^\star \right)  |} \nonumber\\
& = & \left|  \sum_{i=1}^q \beta^\star_i \left( X_{t-i} -  X_{t-i}^{m-i} \left( \alpha^\star, \beta^\star \right) \right) - \sum_{i=1}^q \beta^\star_i \left( X_{t-i} -  X_{t-i}^\infty \left( \alpha^\star, \beta^\star \right) \right) \right|\nonumber\\
& = &  \left|  \sum_{i=1}^q \beta^\star_i \left(X_{t-i}^\infty \left( \alpha^\star, \beta^\star \right) -X_{t-i}^{m-i} \left( \alpha^\star, \beta^\star \right)\right) \right|
\leq    \sum_{i=1}^q \left| \beta^\star_i  \right| \cdot  \left| X_{t-i}^\infty \left( \alpha^\star, \beta^\star \right) -X_{t-i}^{m-i} \left( \alpha^\star, \beta^\star \right) \right| \nonumber\\
& \leq &   \sum_{i=1}^q \left| \beta^\star_i  \right|   \cdot 2M_{\max} \cdot \left( 1- \varepsilon \right) ^ {\frac{m-i}{q}}  \leq \sum_{i=1}^q \left| \beta^\star_i  \right|   \cdot 2M_{\max} \cdot \left( 1- \varepsilon \right) ^ {\frac{m-q}{q}} \nonumber\\
& \leq & \left( 1- \varepsilon \right) \cdot 2M_{\max} \cdot \left( 1- \varepsilon \right) ^ {\frac{m-q}{q}}  =  2M_{\max} \cdot \left( 1- \varepsilon \right) ^ {\frac{m}{q}} , \nonumber
\end{eqnarray*}
which completes the induction.
Recall that $ \ell_t $ is Lipshitz continuous for some Lipshitz constant $ L > 0 $ from Assumption 2, and hence it follows that
\begin{eqnarray*}
\lefteqn{ \left| \mathbb{E} \left[ f^\infty_t\left( \alpha^\star, \beta^\star \right) \right]  - \mathbb{E} \left[ f^m_t\left( \alpha^\star, \beta^\star \right) \right]  \right| = \left| \mathbb{E} \left[ \ell_t \left( X_t , X_t^\infty \left( \alpha^\star, \beta^\star \right) \right) \right] - \mathbb{E} \left[ \ell_t \left( X_t , X_t^m  \left( \alpha^\star, \beta^\star \right) \right) \right]  \right|} \nonumber\\
& \leq &    \mathbb{E} \left[  \left| \ell_t \left( X_t , X_t^\infty  \left( \alpha^\star, \beta^\star \right) \right) - \ell_t \left( X_t , X_t^m  \left( \alpha^\star, \beta^\star \right) \right) \right|  \right]  \leq L \cdot  \mathbb{E} \left[ | X^m_t  \left( \alpha^\star, \beta^\star \right)  - X_t^\infty  \left( \alpha^\star, \beta^\star \right) |  \right] \nonumber\\
&  \leq &    L \cdot 2M_{\max} \cdot \left( 1- \varepsilon \right) ^ {\frac{m}{q}} , \nonumber
\end{eqnarray*}
where the first inequality follows from Jensen's inequality. By summing the above for all $t$ we get that
\[
\left| \sum_{t=1}^T \mathbb{E} \left[ f^\infty_t\left( \alpha^\star, \beta^\star \right) \right] - \sum_{t=1}^T \mathbb{E} \left[ f^m_t\left( \alpha^\star, \beta^\star \right) \right] \right| \leq TL \cdot 2M_{\max} \cdot \left( 1- \varepsilon \right) ^ {\frac{m}{q}}.
\]
Finally, choosing  $ m = q \cdot \log_{1-\varepsilon } \left( \left(  T L M_{\max}  \right)^{-1}\right)  $ yields
\[
\left| \sum_{t=1}^T \mathbb{E} \left[ f^\infty_t\left( \alpha^\star, \beta^\star \right) \right] - \sum_{t=1}^T \mathbb{E} \left[ f^m_t\left( \alpha^\star, \beta^\star \right) \right] \right| = O(1).
\]
 \end{proof}
\begin{lemma} \label{third}
Let $f_t (\alpha, \beta) $, $f_t^\infty (\alpha, \beta) $ and $ \left( \alpha^\star, \beta^\star \right)$ be as denoted in Equations \ref{ft}, \ref{finft} and \ref{star}.
Then, for any data sequence $ \{X_t\}_{t=1}^{T}$ that satisfies the assumptions from Subsection \ref{sec:assumptions}, it holds that
\[
\left| \sum_{t=1}^T \mathbb{E} \left[ f^\infty_t \left( \alpha^\star, \beta^\star \right) \right] - \sum_{t=1}^T \mathbb{E} \left[ f_t \left( \alpha^\star, \beta^\star \right) \right] \right|= O \left(1 \right).
\]
\end{lemma}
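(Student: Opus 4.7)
The structure mirrors Lemma~\ref{second}: reduce the loss difference to a prediction difference via Lipschitz continuity, and then show the prediction difference decays geometrically in $t$. By Jensen's inequality and Assumption~2,
\[
\bigl|\mathbb{E}[f_t^\infty(\alpha^\star,\beta^\star)] - \mathbb{E}[f_t(\alpha^\star,\beta^\star)]\bigr| \;\le\; L \cdot \mathbb{E}\bigl|X_t^\infty(\alpha^\star,\beta^\star) - \tilde{X}_t(\alpha^\star,\beta^\star)\bigr|,
\]
so everything will reduce to bounding the expected discrepancy between the two predictors.

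Unrolling the definitions in Equations~\ref{ft} and~\ref{finft}, the two predictors differ only in the $\beta$-sum: $\tilde{X}_t$ uses the true noise $\epsilon_{t-i}$, whereas $X_t^\infty$ uses the estimated residual $X_{t-i} - X_{t-i}^\infty(\alpha^\star,\beta^\star)$. Writing $\Delta_s := (X_s - X_s^\infty(\alpha^\star,\beta^\star)) - \epsilon_s$ for the gap between the $X^\infty$-residual and the true noise, this gives
\[
X_t^\infty(\alpha^\star,\beta^\star) - \tilde{X}_t(\alpha^\star,\beta^\star) \;=\; \sum_{i=1}^q \beta_i^\star \Delta_{t-i}.
\]
So the task becomes: bound $\mathbb{E}|\Delta_t|$ and show it decays geometrically.

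To obtain a contractive recursion for $\Delta_t$, the plan is to plug the generating Equation~\ref{eq:arma} for $X_t$ and the defining recursion for $X_t^\infty(\alpha^\star,\beta^\star)$ into $\Delta_t = X_t - X_t^\infty(\alpha^\star,\beta^\star) - \epsilon_t$. The $\epsilon_t$ cancels immediately, and the remaining $X_{t-i}$ and $\epsilon_{t-i}$ terms collapse to a contribution of $-\sum_{i=1}^q \beta_i^\star \Delta_{t-i}$ (in the scenario of interest, where $(\alpha^\star,\beta^\star)$ matches the adversary's parameters, as it does for the squared loss), yielding $\Delta_t = -\sum_{i=1}^q \beta_i^\star \Delta_{t-i}$. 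By Assumption~4 the multiplier satisfies $\sum_i |\beta_i^\star| \le 1-\varepsilon$, so this recursion is contractive with factor $1-\varepsilon$. Seeded by $\Delta_1 = -\epsilon_1$ (since $X_1^\infty = X_1$) with $\mathbb{E}|\epsilon_1| \le M_{\max}$, an induction of exactly the same flavor as in Lemma~\ref{second} then yields $\mathbb{E}|\Delta_t| \le 2 M_{\max}(1-\varepsilon)^{t/q}$.

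Combining the two steps, $|\mathbb{E}[f_t^\infty(\alpha^\star,\beta^\star)] - \mathbb{E}[f_t(\alpha^\star,\beta^\star)]|$ is dominated by a geometric sequence in $t$, and summing from $t=1$ to $T$ gives an absolutely convergent geometric series of value $O(1)$, uniformly in $T$. The main technical point is the bookkeeping in the recursion for $\Delta_t$: once the contraction is exhibited, the induction base case and the final summation into a geometric series are essentially mechanical, identical to the corresponding step in Lemma~\ref{second}.
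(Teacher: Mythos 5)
Your overall route (Jensen plus Lipschitz continuity to reduce the loss gap to $\mathbb{E}\left|X_t^\infty(\alpha^\star,\beta^\star)-\tilde{X}_t(\alpha^\star,\beta^\star)\right|$, then a contractive recursion for $\Delta_t = X_t - X_t^\infty(\alpha^\star,\beta^\star)-\epsilon_t$ with contraction factor $\sum_i|\beta_i^\star|\le 1-\varepsilon$, then a geometric sum) is exactly the paper's argument. However, there is one genuine gap: the cancellation that yields $\Delta_t=-\sum_{i=1}^q\beta_i^\star\Delta_{t-i}$ comes from substituting the generating Equation \ref{eq:arma}, which is written with the adversary's coefficients $(\alpha',\beta')$, whereas $X_t^\infty$ and $\Delta_t$ are built from $(\alpha^\star,\beta^\star)$, the minimizer in \ref{star}. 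If the two pairs differ, the recursion picks up uncancelled terms of the form $\sum_i(\alpha_i'-\alpha_i^\star)X_{t-i}+\sum_i(\beta_i'-\beta_i^\star)\epsilon_{t-i}$ and no contraction follows. You acknowledge this only parenthetically (``in the scenario of interest \dots as it does for the squared loss''), but the lemma is stated for general loss functions under Assumptions 1--5, and the identification $(\alpha^\star,\beta^\star)=(\alpha',\beta')$ is itself a claim that must be argued. The paper devotes the first step of its proof precisely to this: $f_t(\alpha',\beta')=\ell_t(X_t,X_t-\epsilon_t)$, and since $\epsilon_t$ is independent of the past and zero-mean while $\ell_t$ is convex, no ARMA prediction based on past information can beat the generating coefficients in expectation, so the argmin in \ref{star} is attained at $(\alpha',\beta')$. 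Your proof needs this step (or the statement must be rephrased with $(\alpha',\beta')$ in place of $(\alpha^\star,\beta^\star)$, as is done in Theorem \ref{add} for adversarial noise).

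A second, minor point is the induction base. For $t=2,\dots,q$ the recursion $\Delta_t=-\sum_{i=1}^q\beta_i^\star\Delta_{t-i}$ refers to $\Delta_s$ with $s\le 0$, which are undefined, so seeding only $\Delta_1=-\epsilon_1$ does not launch the induction and the claimed bound $2M_{\max}(1-\varepsilon)^{t/q}$ is not justified for the first $q$ indices. The paper handles this by simply postulating $\mathbb{E}\left[|X_t-X_t^\infty(\alpha^\star,\beta^\star)-\epsilon_t|\right]<\rho$ for $t=1,\dots,q$ as the base case and noting in a remark that $\rho q L=O(1)$ is assumed; this is bookkeeping rather than a flaw in your idea, but as written your base case is incomplete.
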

\begin{proof}
First, denote by $\left( \alpha',\beta' \right)$ the coefficient vectors that have generated the signal. Trivially, it holds that
\[
\sum_{t=1}^T f_t \left( \alpha',\beta' \right) = \sum_{t=1}^T \ell_t \left( X_t , X_t - \epsilon_t \right) .
\]
From Assumption 1, $\epsilon_t$ is independent of $\epsilon_1,\ldots,\epsilon_{t-1}$, and hence the best prediction available at time $t$ will cause a loss of at least $ \ell_t \left( X_t , X_t - \epsilon_t \right) $ in expectation. We can think of it in the following way: at time $t$, the online player has no previous information regarding the adversary's choice of $\epsilon_t$. Since $ \mathbb{E} \left[ \epsilon_t \right] = 0 $ and $ \ell_t $ is convex, predicting the expected signal is the optimal policy of the online player at time $t$. It follows that $ \left( \alpha^\star, \beta^\star \right) = \left( \alpha',\beta' \right) $, meaning the best ARMA coefficients in hindsight are those that have generated the signal.

Next, we show by induction that $ \mathbb{E} \left[ | X_t - X_t^\infty \left( \alpha^\star, \beta^\star \right)  - \epsilon_t | \right]$ decays exponentially as $t$ grows linearly. Without loss of generality, we can assume that for $t = 1, \dots, q $ we have that $ \mathbb{E} \left[ | X_t -X_t^\infty \left( \alpha^\star, \beta^\star \right)  - \epsilon_t | \right] <  \rho $ for some $ \rho >0 $, as the inductive basis.
Now, for the inductive step we assume that
\[
\mathbb{E} \left[  \left| X_\tau - X_\tau^\infty \left( \alpha^\star, \beta^\star \right)  - \epsilon_\tau \right| \right] <   \rho \cdot \left(1- \varepsilon \right) ^{\frac{\tau}{q}}
 \]
 for $q<\tau < t$, and prove that 
 \[
  \mathbb{E} \left[ | X_t - X_t^\infty  \left( \alpha^\star, \beta^\star \right)  - \epsilon_t | \right] \leq \rho \cdot \left(1- \varepsilon \right) ^{\frac{t}{q}} .    
  \]
Thus,
\begin{eqnarray*}
\lefteqn{ \mathbb{E} \left[ | X_t -X_t^\infty  \left( \alpha^\star, \beta^\star \right)  - \epsilon_t | \right]} \nonumber\\
& = & \mathbb{E} \left[ \left| \sum_{i=1}^k \alpha^\star_i X_{t-i} +  \sum_{i=1}^q \beta^\star_i \epsilon_{t-i} + \epsilon_t - \sum_{i=1}^k \alpha^\star_i X_{t-i} - \sum_{i=1}^q \beta^\star_i \left( X_{t-i} -  X_{t-i}^\infty \left( \alpha^\star, \beta^\star \right) \right)  - \epsilon_t  \right|  \right] \nonumber\\
& = &  \mathbb{E} \left[ \left|  \sum_{i=1}^q \beta^\star_i \left(X_{t-i}^\infty \left( \alpha^\star, \beta^\star \right) -X_{t-i} - \epsilon_{t-i} \right) \right| \right]
\leq    \sum_{i=1}^q \left| \beta^\star_i  \right| \cdot \mathbb{E} \left[  \left| X_{t-i}^\infty \left( \alpha^\star, \beta^\star \right) -X_{t-i} - \epsilon_{t-i} \right| \right] \nonumber\\
& \leq &   \sum_{i=1}^q \left| \beta^\star_i  \right|   \cdot \rho \cdot \left(1- \varepsilon \right) ^{\frac{t-i}{q}}   \leq \sum_{i=1}^q \left| \beta^\star_i  \right|   \cdot  \rho \cdot \left(1- \varepsilon \right) ^{\frac{t-q}{q}}  \leq  \left( 1 - \varepsilon \right)    \cdot   \rho \left(1- \varepsilon \right) ^{\frac{t-q}{q}}  =   \rho \cdot\left(1- \varepsilon \right) ^{\frac{t}{q}} \nonumber
\end{eqnarray*}
which ends the induction.
Recall that $ \ell_t $ is assumed to be Lipshitz continuous for some constant $ L > 0 $, and hence it follows that
\begin{eqnarray*}
\lefteqn{ \left| \mathbb{E} \left[ f^\infty_t \left( \alpha^\star, \beta^\star \right) \right] - \mathbb{E} \left[ f_t\left( \alpha^\star, \beta^\star \right) \right]  \right| = \left| \mathbb{E} \left[ \ell_t \left( X_t ,  X_t^\infty \left( \alpha^\star, \beta^\star \right) \right) \right]- \mathbb{E} \left[ \ell_t \left( X_t , X_t - \epsilon_t \right) \right]  \right|} \nonumber\\
& = & \left| \mathbb{E} \left[  \ell_t \left( X_t ,  X_t^\infty \left( \alpha^\star, \beta^\star \right) \right) - \ell_t \left( X_t , X_t - \epsilon_t \right) \right]  \right|    \nonumber\\
& \leq &    \mathbb{E} \left[  \left| \ell_t \left( X_t ,  X_t^\infty \left( \alpha^\star, \beta^\star \right) \right) - \ell_t \left( X_t , X_t - \epsilon_t \right) \right|  \right]  \leq L \cdot \mathbb{E} \left[  | X_t -X_t^\infty \left( \alpha^\star, \beta^\star \right)  - \epsilon_t |  \right]  \nonumber\\
& \leq &   \rho L \cdot\left(1- \varepsilon \right) ^{\frac{t}{q}} . \nonumber
\end{eqnarray*}
Finally, summing over all iterations yields
\[
\left| \sum_{t=1}^T \mathbb{E} \left[ f^\infty_t \left( \alpha^\star, \beta^\star \right) \right] - \sum_{t=1}^T \mathbb{E} \left[ f_t \left( \alpha^\star, \beta^\star \right) \right] \right| =  O \left( 1 \right) .
\]
\end{proof}
\textbf{Remark}: In Lemma \ref{third} we assume here that $  \rho q L  = O \left( 1 \right) $. Otherwise, an element of $O \left( \rho q L \right) $ is added to the regret bound in Theorems \ref{main}, which does not affect the asymptotic result.

\subsection{ARMA Online Gradient Descent (ARMA-OGD)} \label{sec:OGD}
We now turn to present a different algorithm for choosing $\gamma^t$ at each time point. This algorithm is applicable to general convex loss functions, as well as to exp-concave ones. It is computationally simpler but has a somewhat worse theoretical (and empirical) performance compared to the previous one, when considering an exp-concave loss function. The notation $\Pi_\mathcal{K}$ refers to the Euclidean projection onto $\mathcal{K}$, i.e., $ \Pi_\mathcal{K} (y) = \arg \min_{x \in \mathcal{K}} \| y-x \| _2 $ . 

\begin{algorithm}[h!]
\caption{ARMA-OGD(k,q)}
\label{alg:ogd}
\begin{algorithmic}[1]
\STATE Input: ARMA order k,q. Learning rate $\eta$.
\STATE Set $ m = q \cdot \log_{1-\varepsilon } \left( \left(  T L M_{\max}  \right)^{-1}\right)  $.
\STATE Choose $\gamma^1 \in \mathcal{K}$ arbitrarily.
\FOR {$t=1$ to $(T-1)$}
\STATE Predict $ \tilde{X}_t(\gamma^{t}) = \sum_{i=1}^{m+k} \gamma^{t}_i X_{t-i}$.
\STATE Observe $X_t$ and suffer loss $\ell^m_t(\gamma_t)$.
\STATE Let $\nabla_t = \nabla \ell^m_t(\gamma^t)$
\STATE Set  $\gamma^{t+1} \leftarrow \Pi_\mathcal{K} \Big( \gamma^{t}-\frac{1}{\eta}  \nabla_t \Big)$
\ENDFOR
\end{algorithmic}
\end{algorithm}

\noindent \\For Algorithm \ref{alg:ogd} we can prove the following:
\begin{theorem} \label{main2}
Let $k,q \geq 1$, and set $ \eta = \frac{D}{G \sqrt{T}} $. Then, for any data sequence $ \{X_t\}_{t=1}^{T}$ that satisfies the assumptions from Section \ref{sec:assumptions}, Algorithm \ref{alg:ogd} generates an online sequence $\{\gamma^t\}_{t=1}^T$, for which the following holds:
\begin{equation} \label{eq:ogd}
 \sum_{t=1}^T \ell^m_{t}(\gamma^{t}) - \min_{\alpha,\beta} \sum_{t=1}^T \mathbb{E} \left[ f_t(\alpha, \beta) \right]  = O \left( GD \sqrt{T} \right) .
\end{equation}
\end{theorem}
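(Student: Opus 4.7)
The argument parallels the proof of Theorem \ref{main}, with the Online Newton Step replaced by Online Gradient Descent. The four-term decomposition (online-learning regret, then passage from $\ell^m_t$ to $f^m_t$, from $f^m_t$ to $f^\infty_t$, and from $f^\infty_t$ to $f_t$) carries over verbatim; only the first term needs a new bound that is tailored to general convex losses rather than to exp-concave ones.

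First I would invoke the standard regret analysis of projected Online Gradient Descent on the sequence of convex loss functions $\{\ell_t^m\}_{t=1}^T$ over the compact convex set $\mathcal{K}$. With the step-size $\eta = D/(G\sqrt{T})$ prescribed in the statement, Zinkevich's theorem (as recalled, e.g., in \cite{DBLP:journals/ml/HazanAK07}) gives
\[
\sum_{t=1}^T \ell^m_t(\gamma^t) - \min_{\gamma\in\mathcal{K}} \sum_{t=1}^T \ell^m_t(\gamma) = O\bigl(GD\sqrt{T}\bigr).
\]
This uses only the convexity of $\ell^m_t$, the diameter bound $D$ on $\mathcal{K}$ from \eqref{eq:params_D}, and the gradient bound $\|\nabla \ell^m_t(\gamma)\|\le G$; no exp-concavity is needed, which is why this algorithm applies to all convex losses.

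Next I would chain the three lemmas exactly as in the proof of Theorem \ref{main}. Lemma \ref{first} yields $\min_\gamma \sum_t \ell^m_t(\gamma) \le \sum_t f^m_t(\alpha^\star,\beta^\star)$ by taking $\gamma^\star_i = c_i(\alpha^\star,\beta^\star)$; Lemma \ref{second} gives $|\sum_t \mathbb{E}[f^m_t(\alpha^\star,\beta^\star)] - \sum_t \mathbb{E}[f^\infty_t(\alpha^\star,\beta^\star)]| = O(1)$ for the same choice $m = q\log_{1-\varepsilon}((TLM_{\max})^{-1})$ used by Algorithm \ref{alg:ogd}; and Lemma \ref{third} gives $|\sum_t \mathbb{E}[f^\infty_t(\alpha^\star,\beta^\star)] - \sum_t \mathbb{E}[f_t(\alpha^\star,\beta^\star)]| = O(1)$. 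Combining the OGD bound with these three $O(1)$ terms produces
\[
\sum_{t=1}^T \ell^m_t(\gamma^t) - \min_{\alpha,\beta} \sum_{t=1}^T \mathbb{E}[f_t(\alpha,\beta)] = O\bigl(GD\sqrt{T}\bigr),
\]
which is \eqref{eq:ogd}.

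There is really no genuine obstacle: the three lemmas depend only on Assumptions 1--5 and on the choice of $m$, both of which Algorithm \ref{alg:ogd} shares with Algorithm \ref{alg1}, so swapping ONS for OGD is a drop-in replacement at the level of the online-learning regret bound. The only point worth verifying carefully is that $G$ and $D$ as defined in Subsection \ref{sec:params} are the right constants to plug into Zinkevich's bound, which is immediate from their definitions. The asymptotic rate degrades from $\log T$ to $\sqrt{T}$ precisely because we no longer exploit exp-concavity, exactly as one expects for a first-order method on a general convex loss.
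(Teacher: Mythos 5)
Your proposal is correct and follows essentially the same route as the paper, which simply notes that the proof of Theorem \ref{main} goes through verbatim once the Online Newton Step regret bound is replaced by Zinkevich's $O(GD\sqrt{T})$ bound for Online Gradient Descent, with Lemmas \ref{first}, \ref{second} and \ref{third} reused unchanged. Nothing further is needed.
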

The proof of this theorem is very similar to the proof of Theorem \ref{main}, albeit plugging into our framework the Online Gradient Descent (OGD) algorithm of \cite{Zinkevich03} rather than the Online Newton Step algorithm.

\section{Additional results}\label{sec:add}
In this section we present an analysis for the case when the noise terms are allowed to be adversarial, and also an application of Theorem \ref{main} for squared loss.

\subsection{Adversarial noise}\label{sec:advnoise}
The results presented in Theorems \ref{main} and \ref{main2} rely on the assumptions that the noise terms are independent and zero-mean. Under these assumptions, the best coefficient vectors in hindsight are those that have generated the signal. However, if we allow the noise terms to be adversarially generated (the adversary chooses $\epsilon_t$ at time $t$ with no limitations), the best coefficient vectors in hindsight are not necessarily the ones used for generating the  signal. For this case we have the following theorem:
\begin{theorem} \label{add}
Denote by $\left( \alpha',\beta' \right)$ the coefficient vectors that have generated the signal, and assume that $ \{X_t\}_{t=1}^{T} $ satisfies Assumptions 2-5 from Section \ref{sec:assumptions}, when the noise terms are allowed to be chosen adversarially. Then, for exp-concave loss functions Algorithm \ref{alg1} generates an online sequence $\{\gamma^t\}_{t=1}^T$, for which the following holds:
\begin{eqnarray*}
 \sum_{t=1}^T \ell^m_{t}(\gamma^{t}) - \sum_{t=1}^T \mathbb{E} \left[ f_t \left( \alpha',\beta' \right) \right]  = O \left( \left( GD+ \frac{1}{\lambda} \right) \log(T) \right) ,
\end{eqnarray*}
and for convex loss functions, Algorithm \ref{alg:ogd} generates an online sequence $\{\gamma^t\}_{t=1}^T$, for which the following holds:
\begin{eqnarray*}
 \sum_{t=1}^T \ell^m_{t}(\gamma^{t}) - \sum_{t=1}^T \mathbb{E} \left[ f_t \left( \alpha',\beta' \right) \right]  = O \left( GD \sqrt{T} \right) .
\end{eqnarray*}
\end{theorem}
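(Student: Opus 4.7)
The proof proceeds exactly as in Theorem \ref{main}, with $(\alpha',\beta')$ substituted for $(\alpha^\star,\beta^\star)$ throughout. Inspection of the three-step chain (regret bound, coefficient matching, approximation $f^m \to f^\infty \to f$) shows that Assumption~1 is invoked only once in the proof of Theorem \ref{main}: inside Lemma \ref{third}, to identify the unconstrained hindsight minimizer $(\alpha^\star,\beta^\star)$ with the generating parameters $(\alpha',\beta')$. In Theorem \ref{add} we benchmark directly against $(\alpha',\beta')$, so this identification is simply dropped rather than replaced.

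Concretely, the plan is as follows. First, the ONS regret bound (for exp-concave losses) and the OGD regret bound (for general convex losses) against $\min_\gamma \sum_{t=1}^T \ell^m_t(\gamma)$ depend only on the convex/exp-concave structure of the $\ell^m_t$ and on the boundedness parameters $D,G,\lambda$; none of these depend on the noise being i.i.d.\ or zero-mean, so Step~1 of the proof of Theorem \ref{main} carries over verbatim for both algorithms. Second, Lemma \ref{first} is a pure coefficient-matching argument: choosing $\gamma_i=c_i(\alpha',\beta')$ yields $\sum_t \ell^m_t(\gamma)=\sum_t f^m_t(\alpha',\beta')$, so $\min_\gamma \sum_t \ell^m_t(\gamma)\le \sum_t f^m_t(\alpha',\beta')$ regardless of how the noise is generated. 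Third, the induction in Lemma \ref{second} uses only Assumption~4, namely $\sum_i|\beta'_i|<1-\varepsilon$, together with an $O(1)$ base case controlling $|X_t-X_t^\infty(\alpha',\beta')|$; the latter is supplied by the adversarial-noise analog of Lemma \ref{third} described next.

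The heart of the adaptation is the algebraic decay underlying Lemma \ref{third}. Setting $e_t := X_t - X_t^\infty(\alpha',\beta') - \epsilon_t$ and substituting the ARMA recursion for $X_t$ together with the defining recursion for $X_t^\infty(\alpha',\beta')$, all $\alpha'_i X_{t-i}$ terms cancel and one obtains
\[
e_t = -\sum_{i=1}^q \beta'_i\, e_{t-i},
\]
which, combined with Assumption~4, yields $|e_t|\le (1-\varepsilon)\max_{t-q\le s<t}|e_s|$ and hence geometric decay of $|e_t|$ \emph{independent of the noise distribution}. Assumption~2 (Lipschitzness of $\ell_t$) converts this into geometric decay of $|f^\infty_t(\alpha',\beta')-f_t(\alpha',\beta')|$, so summing over $t$ produces an $O(1)$ bound on the expected difference. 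Chaining the three bounds gives the claimed $O\big((GD+\tfrac{1}{\lambda})\log T\big)$ regret for ARMA-ONS and $O(GD\sqrt T)$ for ARMA-OGD.

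The main subtlety is the base case of the $e_t$ induction: without the zero-mean assumption one cannot appeal to cancellation in expectation, so instead one bounds $|e_t|$ for the first $q$ steps by a constant using Assumption~5 (bounded signal) together with the boundedness of the adversarial noise implicit in keeping $|X_t|<1$. Once this constant base case is in hand, the geometric tail contributes only $O(1)$ to the total regret, matching the form of the original argument and completing the proof.
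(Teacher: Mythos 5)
Your proposal is correct and takes essentially the same route as the paper, which proves Theorem \ref{add} simply by re-running the proof of Theorem \ref{main} with $(\alpha',\beta')$ plugged into Lemmas \ref{second} and \ref{third} in place of $(\alpha^\star,\beta^\star)$, dropping the hindsight-identification step that required zero-mean independent noise. In fact your write-up is more explicit than the paper's one-line argument, in particular in noting the purely algebraic recursion $e_t=-\sum_i\beta'_i e_{t-i}$ and in handling the base case via the boundedness of the noise implied by Assumptions 3--5.
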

Notice that we compare here the total loss suffered by our algorithms to the expected loss suffered by ARMA prediction with the coefficient vectors that have generated the signal, and not to the expected loss of the best ARMA prediction in hindsight. Nevertheless, this theorem captures interesting cases (e.g., correlated noise), in which traditional approaches fail to perform properly.
The proof of this theorem resembles  the proof of Theorem \ref{main}, with the modification of plugging $  \left( \alpha',\beta' \right) $ into Lemmas \ref{second} and  \ref{third}, instead of $ \left( \alpha^\star, \beta^\star \right) $.

\subsection{Application of Theorem \ref{main} to squared loss }\label{sec:sqr}
As already mentioned, the squared loss is the most commonly used loss function in time series analysis. It is defined as $\ell_t ( X_t , \tilde{X}_t ) =  ( X_t - \tilde{X}_t ) ^2 $ 
for prediction $ \tilde{X}_t$ and signal $X_t$. In our case, the predictions come from an AR model with horizon $\left( m + k \right)$, and hence our loss at time $t$ is $ ( X_t - \sum_{i=1}^{m+k} \gamma_i^t X_{t-i} )^2$, when $\{\gamma^t\}_{t=1}^T$ are generated using Algorithm \ref{alg1}. Substituting the values of $G$, $D$ and $\lambda$, as defined and computed in Subsection \ref{sec:params} for the squared loss, yields the following result:
\begin{equation} \label{eq:sq_loss}
 \sum_{t=1}^T \ell^m_{t}(\gamma^{t}) - \min_{\alpha,\beta} \sum_{t=1}^T \mathbb{E} \left[ f_t(\alpha, \beta) \right]  = O \left( k \log \left(T\right) + q \log ^2 \left(T\right) \right) .
\end{equation}
This result implies that the average loss suffered by Algorithm \ref{alg1} converges asymptotically to the average loss suffered by the best ARMA prediction in hindsight, under the assumptions from Section \ref{sec:assumptions}. In section \ref{sec:experiments} we empirically verify this theoretical result, under some different settings.


\section{Experiments}\label{sec:experiments}

The following experiments demonstrate the prediction effectiveness of the proposed algorithms, under some different settings. We compare the performance to the ARMA-RLS algorithm, which was presented in \cite{DinShiChe06}. In a few words, the ARMA-RLS is a ``proper learning" algorithm --- it tries to mimic the underlying model. It estimates the noise terms using a recursive least squares based method, and satisfies a prediction using these estimations and the previous signals. The ARMA-RLS does not assume noise stationarity or ergodicity. We also benchmark the standard Yule-Walker estimation method\footnote{Yule-Walker estimation method is offline. We use it as an online prediction method by a simple adaptation --- we let it predict the signal at time $t$ with the knowledge of the signal at times $1,\ldots,t-1$.}.
The results are displayed in the figures below. In all cases, the $x$-axis is time (number of samples), and the $y$-axis is the average squared loss.

\subsection{Experiments with artificial data}\label{subsec:experiments1}
In all experimental settings below we have averaged the results over 20 runs for stability. Also, we choose the order of our AR prediction to be $ m+k = 10$ in all settings.   \\
\begin{figure}[ht!]
     \begin{center}
        \subfigure[\textbf{Setting 1.} Sanity check]{%
            \label{fig:1}
            \includegraphics[width=0.5\textwidth]{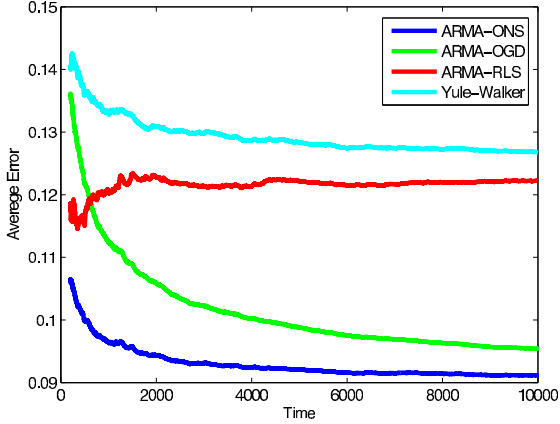}
        }%
        \subfigure[\textbf{Setting 2.} Slowly changing coefficients]{%
           \label{fig:2}
           \includegraphics[width=0.5\textwidth]{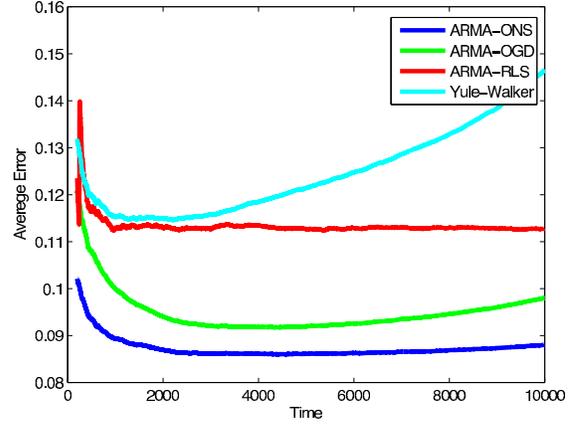}
        }\\ 
        \subfigure[\textbf{Setting 3.}  Abrupt change ]{%
            \label{fig:3}
            \includegraphics[width=0.5\textwidth]{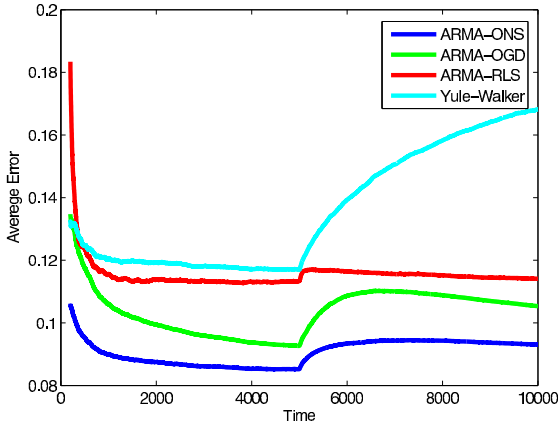}
        }%
        \subfigure[\textbf{Setting 4.}  Correlated noise ]{%
            \label{fig:4}
            \includegraphics[width=0.5\textwidth]{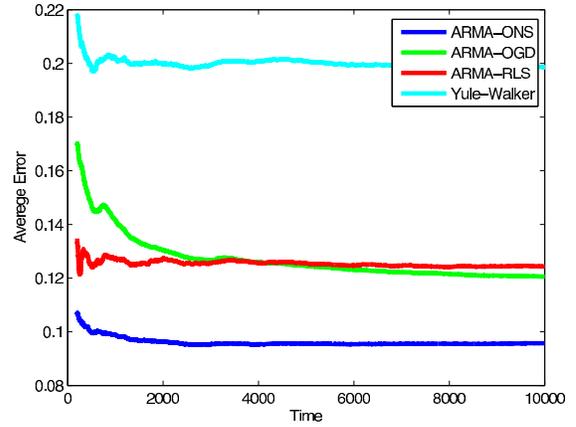}
        }%
    \end{center}
    \caption{%
        Experimental results for artificial data, all averaged over 20 runs.
     }%
   \label{fig:subfigures}
\end{figure}

\noindent \textbf{Setting 1.}
We started with a simple sanity check using Gaussian noise. We generated a stationary ARMA process using the coefficient vectors $\alpha = [0.6,-0.5,0.4,-0.4,0.3]$ and $\beta = [0.3,-0.2]$, when the noise terms are uncorrelated and normally distributed as $\mathcal{N}(0,0.3^2)$.
Note that since predicting the noise is impossible, a perfect predictor will suffer an average error rate of at least the variance of the noise --- 0.09 in this setting. As can be seen in Figure \ref{fig:1} the ARMA-ONS algorithm outperforms the other online algorithms due to its lower regret in this setting of exp-concave loss functions, and quickly approaches the performance of the perfect predictor.\\

\noindent \textbf{Setting 2.}
We generated the non-stationary ARMA process using the coefficient vectors $\beta = [0.32,-0.2]$ and
\begin{eqnarray*}
\alpha (t) & = & [-0.4, -0.5, 0.4, 0.4, 0.1] * \Big( \frac{t}{10^4} \Big) + [0.6, -0.4, 0.4, -0.5, 0.4] * \Big( 1 - \frac{t}{10^4} \Big),
\end{eqnarray*}
i.e., the coefficient vectors change slowly in time. The noise terms are uncorrelated and distributed uniformly on $[-0.5,0.5]$ (denoted as $Uni[-0.5,0.5]$).
In this setting, a perfect predictor will suffer average error rate of at least 0.0833, due to the variance of the noise. The motivation behind this setting is to demonstrate the effectiveness of the online algorithms in the non-stationary case, in which the coefficients change in time. This is especially important when dealing with real data time series, since the stationarity assumption is rather strict.
In Figure \ref{fig:2} we can see the clear advantage of our online algorithms. Here again, ARMA-ONS is superior to the other algorithms, despite it being less adaptive --- as  the theoretical bounds predict; see \cite{HaSe09} for discussion of adaptivity of OGD vs.~ONS.\\

\noindent \textbf{Setting 3.}
Here we consider the non-stationary ARMA process that is generated using two different sets of coefficient vectors. The first set is $\alpha = [0.6,-0.5,0.4,-0.4,0.3]$ and $\beta = [0.3,-0.2]$, and it is used for generating the signal at the first half of the iterations. The second set is $\alpha = [-0.4, -0.5, 0.4, 0.4, 0.1]$ and $\beta = [-0.3,0.2]$, and it is used for generating the signal at the second half of the iterations. The noise terms are uncorrelated and distributed $Uni[-0.5,0.5]$.
In Figure \ref{fig:3} we demonstrate the effectiveness of online algorithms in a scenario when the coefficients abruptly change. Here again, a perfect predictor will suffer average error rate of at least 0.0833, due to the variance of the noise.\\

\noindent \textbf{Setting 4.}
Consider an ARMA process that is generated using the coefficient vectors $\alpha = [0.11, -0.5]$ and $\beta = [0.41, -0.39, -0.685, 0.1]$. Each noise term is distributed normally, with expectation that is the value of the previous noise term, and variance $0.3^2$. I.e., the noise terms are positively correlated.
In Figure \ref{fig:4} one can clearly see the robustness of online algorithms to correlated noise. Note that despite the correlativity introduced in this setting, ARMA-ONS achieves an average error rate that converges approximately to the variance of the noise --- 0.09. 

\subsection{Experiments with real data}\label{subsec:experiments2}
In this section we provide some preliminary results on real data time series, and show that for such data as well, our online learning approach is reasonably effective compared to existing approaches. For robustness, we consider time series from different fields.
\begin{figure}[ht!]
     \begin{center}
        \subfigure[Monthly average temperature]{%
            \label{fig:5}
            \includegraphics[width=0.5\textwidth]{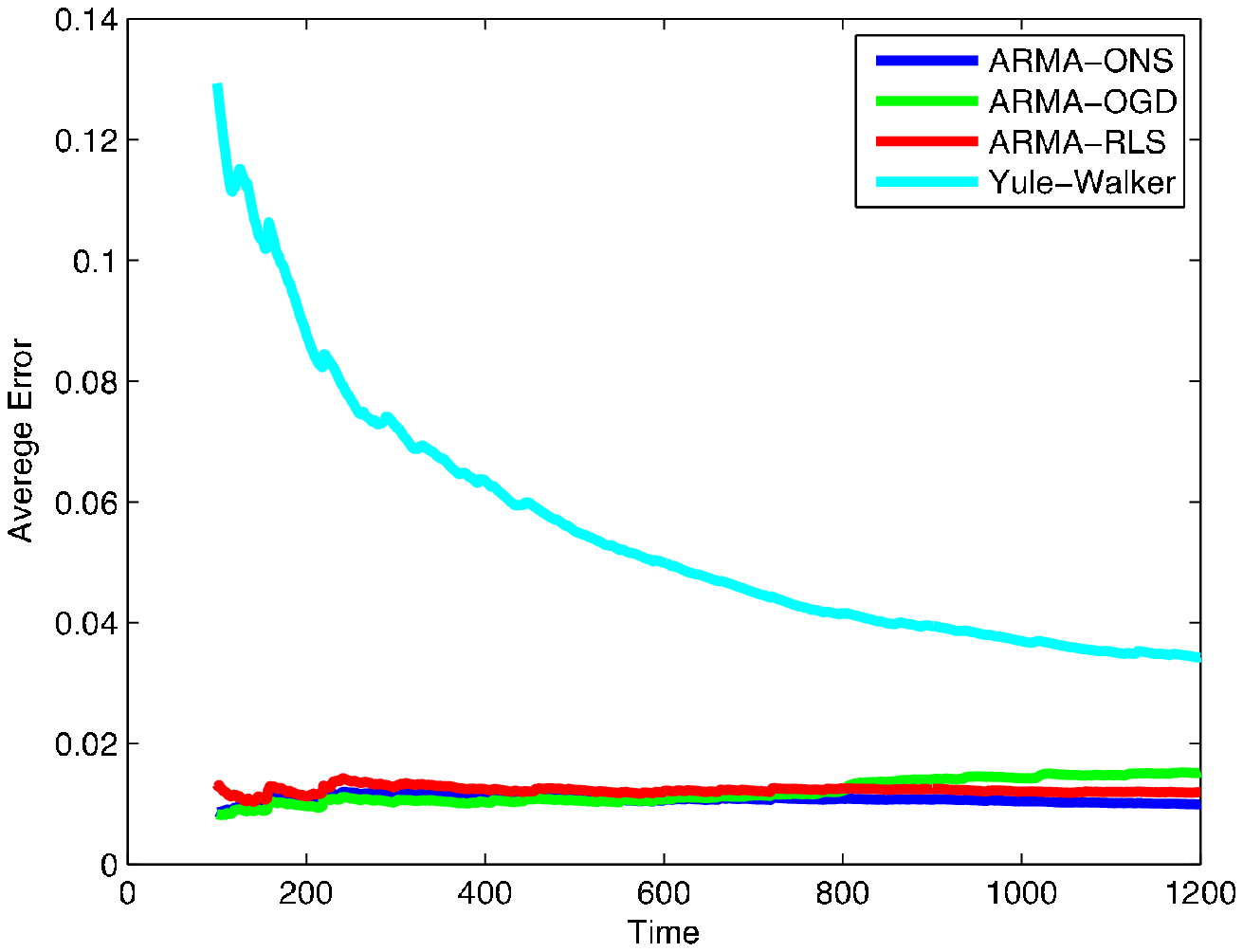}
        }%
        \subfigure[S\&P 500 daily returns]{%
           \label{fig:6}
           \includegraphics[width=0.5\textwidth]{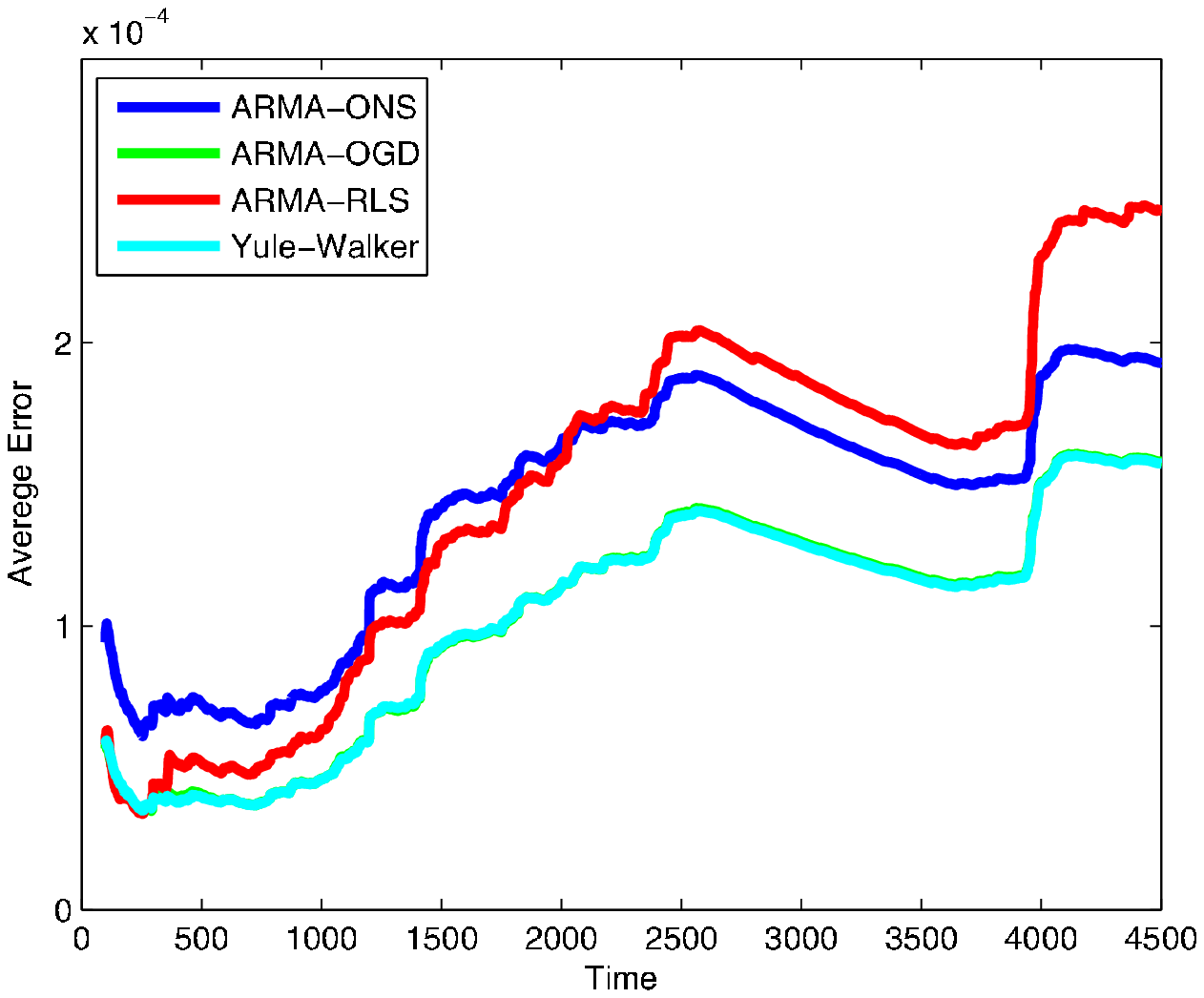}
        }\\ 
    \end{center}
    \caption{%
        Experimental results for real data.
     }%
   \label{fig:subfigures2}
\end{figure}

The first time series is taken from the field of weather research. Each data point in this time series is the monthly average temperature of the sea surface, measured at a specific point. The data is taken from the \href{http://www.esrl.noaa.gov/psd/gcos_wgsp/Timeseries/GLBTS/}{Global Climate Observing System (GCOS)} website.
Since we are dealing with a weather related time series, and considering the monthly average temperature, it is rather reasonable that the time series follows a certain pattern. As can be seen in Figure \ref{fig:5}, this pattern can be well learned using the ARMA model by all four algorithms. However, the results clearly indicate the superiority of online algorithms.


The second time series is taken from the field of finance. Each data point in this time series is the daily return of the S\&P 500 index. The data is taken from \href{http://finance.yahoo.com/q/hp?s=SPY+Historical+Prices}{Yahoo! Finance}.
The results in Figure \ref{fig:6} indicate that the ARMA model is probably not a good model for predicting the returns of the S\&P 500 index. A possible reason is that the ARMA model is not rich enough, i.e., knowing the history of returns is not sufficient for satisfying a good prediction. The fact that offline familiar methods also fail here, strengthens this claim. See Section \ref{conclusion} for further discussion about fitting a time series model for financial data.

\section{Conclusion and discussion} \label{conclusion}
In this paper we developed a new approach for time series analysis --- an online learning approach. Our main result in this paper is that one can predict time series as well as the best ARMA model,  regardless of the loss function considered, under weak assumptions on the noise terms --- zero mean distribution. This result is strengthened in light of the fact that the noise terms in the underlying model are unknown to us at any stage. We overcome this difficulty by using improper learning techniques. Additionally, we present an analytical extension of our approach to adversarially generated noise terms. The main powerful properties of the online approach, as pointed out in our work, are generality, simplicity and efficiency, in comparison to existing methods.

There are three issues that remain for further research.
First, in our analysis we assume that $ \sum_{i=1}^q | \beta_i | < 1-\varepsilon$ for some $ \varepsilon > 0 $, which seems to limit the freedom of the $\beta$ coefficients. This assumption appears sometimes in the literature (e.g. in GARCH models) and is a sufficient condition for the MA component to be causally invertible, yet not necessary. In our case, we believe that this assumption follows from our proof techniques and the results would still hold for any $\beta$ coefficients. 
Second, in Section \ref{sec:add} we present results in which the total loss suffered by our algorithms is compared to the expected loss suffered by ARMA prediction with the coefficient vectors that have generated the signal.
Whereas competing against the best ARMA prediction under adversarial noise is impossible because of identifiability issues, it would be interesting to study intermediate setups such as correlated or adversarial noise to some extent.
Third, the ARMA model is not compatible for any time series, as can be seen in Section \ref{subsec:experiments2}, when a finance related time series is considered. However, \cite{Engle1982} showed that some finance related time series can be well predicted using the ARCH model and its expansions. Therefore, it would be interesting to generalize our work to other time series models, such as ARCH and ARIMA.

%

\newpage

\bibliography{arma_arxiv}
\bibliographystyle{alpha}

\end{document}